\newcommand*\iftodonotes{\if@todonotes@disabled\expandafter\@secondoftwo\else\expandafter\@firstoftwo\fi}  
\newcommand{\note}[4][]{\todo[author=#2,color=#3,size=\scriptsize,fancyline,caption={},#1]{#4}} 
\newcommand{\ryan}[2][]{\note[#1]{ryan}{violet!40}{#2}}
\newcommand{\roger}[2][]{\note[#1]{roger}{green!40}{#2}}
\newcommand{\clara}[2][]{\note[#1]{clara}{orange}{#2}}
\newcommand{\tiago}[2][]{\note[#1]{tiago}{cyan!40}{#2}}
\renewcommand{\hat}{\widehat}
\newcommand{\citeposs}[1]{\citeauthor{#1}'s (\citeyear{#1})}
\DeclarePairedDelimiterX{\infdivx}[2]{(}{)}{%
  #1\;\delimsize\|\;#2%
}
\crefname{section}{\S}{\S\S}
\Crefname{section}{\S}{\S\S}
\crefname{table}{Tab.}{}
\crefname{figure}{Fig.}{Figs.}
\crefname{algorithm}{Algorithm}{}
\crefname{equation}{Eq.}{Eqs.}
\crefname{line}{Line}{}
\crefname{appendix}{App.}{}
\crefname{thm}{Theorem}{}
\crefname{cor}{Corollary}{}
\crefname{prop}{Proposition}{}
\crefname{def}{Definition}{}
\newtheorem{theorem}{Theorem}[section]
\newcommand{\ww}{\mathbf u}
\newcommand{\surp}{s}
\newcommand{\mulang}{\mu_{\mathrm{lang}}}
\newcommand{\musent}{\mu_{\mathrm{sent}}}
\newcommand{\dist}{\Delta}
\newcommand{\uidinv}{\textsc{uid}^{-1}}
\newcommand{\defeq}[0]{\mathrel{\stackrel{\textnormal{\tiny def}}{=}}}
\newcommand{\defn}[1]{\textbf{#1}}
\newcommand{\peffort}{\mathrm{Effort}}
\newcommand{\deltaLL}{\Delta\mathrm{LogLik}}
\newcommand{\accept}{\mathrm{Acceptability}}
\newcommand{\reading}{\mathrm{ReadingTime}}
\definecolor{darkgrey}{rgb}{0.2,0.2,0.2}
\definecolor{darkgreen}{rgb}{0.0,0.6,0.0}
\definecolor{darkblue}{rgb}{0.0,0.0,0.5}
\newcommand{\db}[1]{\textcolor{darkblue}{\bf\scriptstyle \selectfont \,(\pm#1)}}
\title{Revisiting the Uniform Information Density Hypothesis}
\newcommand{\ucambridge}{2}
\newcommand{\uzh}{3}
\newcommand{\ethz}{1}
\newcommand{\up}{4}
\newcommand{\MIT}{5}
\author{Clara Meister$^{\ethz}$,~\;~Tiago Pimentel$^{\ucambridge}$,~\;~Patrick Haller$^{\uzh}$,~\;~ Lena Jäger$^{\uzh,\up}$, \\
\textbf{Ryan Cotterell$^{\ethz,\ucambridge}$,~\;~ Roger Levy$^{\MIT}$}\\
  $^{\ethz}$ETH Z\"{u}rich~\;~ $^{\ucambridge}$University of Cambridge~\;~$^{\uzh}$University of Zurich~\;~ \\
  $^{\up}$University of Potsdam~\;~$^{\MIT}$Massachusetts Institute of Technology \\
  
  \texttt{\href{mailto:clara.meister@inf.ethz.ch}{clara.meister@inf.ethz.ch}}~\;~ \texttt{\href{mailto:tp472@cam.ac.uk}{tp472@cam.ac.uk}}~\;~ \texttt{\href{mailto:haller@cl.uzh.ch}{haller@cl.uzh.ch}} \\
  \texttt{\href{mailto:jaeger@cl.uzh.ch}{jaeger@cl.uzh.ch}}~\;~ \texttt{\href{mailto:ryan.cotterell@inf.ethz.ch}{ryan.cotterell@inf.ethz.ch}}~\;~ \texttt{\href{mailto:rplevy@mit.edu}{rplevy@mit.edu}}
}
\date{}
\begin{document}
\maketitle
\begin{abstract}
The uniform information density (UID) hypothesis posits a preference among language users for utterances structured such that information is distributed uniformly across a signal. 
While its implications on language production have been well explored, the hypothesis potentially makes predictions about language comprehension and linguistic acceptability as well. Further, it is unclear how uniformity in a linguistic signal---or lack thereof---should be measured, and over which linguistic unit, e.g., the sentence or language level, this uniformity should hold.
Here we investigate these facets of the UID hypothesis using reading time and acceptability data. While our reading time results are generally consistent with previous work, they are also consistent with a weakly super-linear effect of surprisal, which would be compatible with UID's predictions. For acceptability judgments, we find clearer evidence that non-uniformity in information density is predictive of lower acceptability. We then explore multiple operationalizations of UID, motivated by different interpretations of the original hypothesis, and \roger{Presumably everything after this point in the abstract needs to be rewritten?}\clara{hmmm why is that? these results are from the last set of experiments}analyze the scope over which the pressure towards uniformity is exerted. 
The explanatory power of a subset of the proposed operationalizations suggests that the strongest trend may be a regression towards a mean surprisal across the language, rather than the phrase, sentence, or document---a finding that supports a typical interpretation of UID, namely that it is the byproduct of language users maximizing the use of a (hypothetical) communication channel.\footnote{Analysis pipeline is publicly available and can be found at \url{https://github.com/rycolab/revisiting-uid}.}\looseness=-1
\end{abstract}

\section{Introduction}
\begin{figure}[h!]
    \centering
    \includegraphics[width=\linewidth]{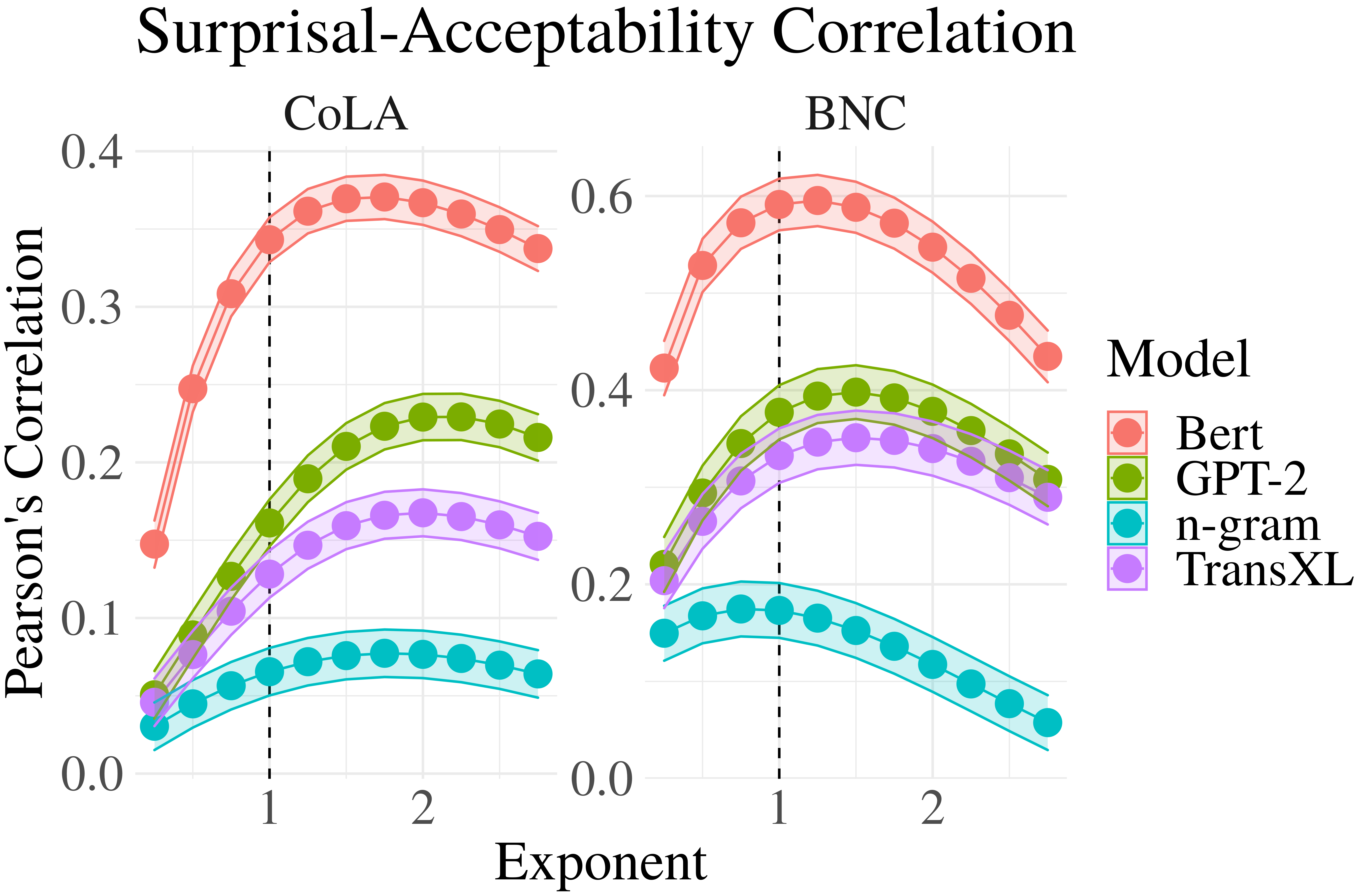}
    \caption{Correlation coefficient between (negative) sum of surprisals raised to the $k^{\text{th}}$ power and linguistic acceptability judgments of a sentence. 
    The higher correlation when $k>1$ implies sentences with a more uniform distribution of information are more acceptable.\looseness=-1
    } \label{fig:cola}
     \setlength{\belowcaptionskip}{-30pt}
\end{figure}

The uniform information density (UID) hypothesis \citep{fenk1980konstanz,levy2007speakers} states that language users prefer when information content (measured information-theoretically as \emph{surprisal}) is distributed as smoothly as possible throughout an utterance. The studies adduced in support of this hypothesis in language production span levels of linguistic structure: from phonetics \cite{aylett-2004} to lexical choice \cite{MAHOWALD2013313}, to syntax \cite{Jaeger2010RedundancyAR}, and to discourse (\citealt{asr-demberg:2015uniform-surprisal}) (though see \citealt{zhan-levy:2018naacl,zhan-levy:2019-availability}). Despite this evidence, there are several aspects of the UID hypothesis that lack clarity or unity. For example, there is a dearth of converging evidence from studies in language \emph{comprehension}. Furthermore, multiple candidate operationalizations of UID have been proposed, each without formal justification for their choices \cite{collins2014,jain-etal-2018-uniform,meister-etal-2020-beam,wei+al.acl2021}.\looseness=-1

In this work, we attempt to shed light on these issues: we first study the relationship between the distribution of information content throughout a sentence and native speakers' (i) sentence-level reading times and (ii) sentence acceptability judgments. While our results for sentence-level reading times do not contradict previous word-level reading time analyses (e.g., \citealt{smith2013-log-reading-time,goodkind-bicknell:2018predictive}), which have shown a linear effect of surprisal, they suggest that a slight super-linear effect may likewise be a plausible explanation---which is in line with predictions of the UID hypothesis. For sentence acceptability judgments, we see more concrete signs of a super-linear effect of sentence-level surprisal (see \cref{fig:cola}), consistent with a preference for UID in language.
Given these findings, we next ask how we can best measure UID. We review previous results supporting UID, in search of an operationalization and find that in most of these studies, adherence to UID is measured via an analysis of individual linguistic units, without direct consideration for the information content carried by \emph{surrounding} units \cite{Frank2008SpeakingRU,Jaeger2010RedundancyAR,MAHOWALD2013313}.
Such a definition fails to account for the distribution across the signal as a whole.

Consequently, we present and motivate a set of plausible operationalizations---either taken from the literature or newly proposed. Given our earlier results, we posit that good operationalizations of UID should provide strong explanatory power for human judgments of linguistic acceptability and potentially reading times. 
In this search, we additionally explore with respect to which linguistic unit---a phrase, sentence, document, or language as a whole---uniformity should be measured. 
Our results provide initial evidence that the best definition of UID may be a super-linear function of word surprisal. Further, we see that a regression towards the mean information content of the entire language, rather than a local information rate, may better capture the pressure for UID in natural language, a theory that falls in line with its information-theoretical interpretation, i.e., that language users maximize the use of a hypothetical noisy channel during communication.\tiago{Should we add a footnote here about our other paper in emnlp? Perhaps saying. "In x we investigate this same pressure cross-linguistically. Finding evidence for a regression towards a cross-linguistic mean information content."}\clara{If we have the space in the end}\looseness=-1

\section{Processing Effort in Comprehension}\label{sec:background}
In psycholinguistics, there are a number of theories that explain how the effort required to process language varies as a function of some perceived linguistic unit. Several of these are founded in information theory \cite{shannon1948mathematical}, using the notion of language as a communication system in order to build computational models of processing. 
Under such a framework, linguistic units convey information, and the exact amount of information a unit carries can be quantified as its \defn{surprisal}---also termed Shannon information content. Formally, let us consider a linguistic signal $\ww = \langle u_1, \dots, u_N\rangle$ as a sequence of linguistic units, e.g., words or morphemes; the standard definition of surprisal is then $\surp(u_n) \defeq - \log p(u_n \mid \mathbf u_{< n})$, i.e., a unit's negative log-probability conditioned on its prior context. Note that under this definition, low probability items are seen as more informative, which reflects the intuition that unpredictable items convey more information than predictable ones. With this background in mind, we now review two prominent examples of information-theoretic models of language processing: surprisal theory and the uniform information density hypothesis. 
\looseness=-1

\subsection{Surprisal Theory}
Surprisal theory \cite{hale-2001-probabilistic} posits that the incremental load of processing a word is directly related to how unexpected the word is in its context, i.e., its surprisal. Mathematically formulated, the processing effort required for the word $u_n$ follows a linear relationship with respect to its surprisal:\looseness=-1
\begin{equation}\label{processing_effort}
    \peffort(u_n)\propto s(u_n )
\end{equation}
Over the years, surprisal theory has been further motivated and received wide empirical support \cite{levy2008expectation,brouwer-etal-2010-modeling}.\footnote{
\citet{levy2008expectation} connects surprisal theory to resource reallocation---the effort required to update an internal probability distribution over possible parses during sentence comprehension.
\citet{brouwer-etal-2010-modeling} found that surprisal theory accounts for processing difficulty when disambiguating certain linguistic structures in Dutch.}
Notably, a number of works give evidence that this relationship (between processing effort and surprisal) is indeed linear (equivalently, logarithmic in probability; \citealt{smith2013-log-reading-time,frank-etal-2013-word,goodkind-bicknell-2018-predictive}, though see \citealt{BROTHERS2021104174}).

\subsection{Uniform Information Density}
Given the formal definition of surprisal, the information content of the entire linguistic signal $\ww$ can be quantified as the sum of individual surprisals. 
Following \cref{processing_effort}, the effort to process $\ww$ would thus be proportional to this sum, i.e.:
\begin{equation}\label{eq:cost_sum}
    \peffort(\ww)\propto \sum_{n=1}^N s(u_n)
\end{equation}
But this has a counter-intuitive consequence. Suppose a speaker has a fixed number of bits of information to convey. \cref{eq:cost_sum} predicts that \emph{all ways of distributing that information in an utterance would involve equal processing effort}: packing it all into a single, short utterance; spreading it out thinly in an extremely long utterance; dispersing it in a highly uneven profile throughout an utterance.\looseness=-1

The theory of uniform information density (UID; \citealt{fenk1980konstanz,genzel-charniak-2002-entropy,bell2003effects,aylett-2004,levy2007speakers}) attempts to reconcile the role of surprisal in determining processing effort with the intuition that perhaps not all ways of distributing information content have equal effect on overall processing effort. Rather, UID predicts that communicative efficiency is maximized when information---again quantified as per-unit surprisal---is distributed \emph{as uniformly as possible} throughout a signal. One way of deriving this prediction is to hypothesize that the processing effort for a sentence is an additive function of (i) a \emph{super-linear} function of surprisal; and (ii) utterance length:\footnote{See also Ch.2 of \citet{levy-thesis} and \citet{levy:2018cogsci} for more extensive discussion.}\looseness=-1
\begin{equation}\label{eq:cost}
    \peffort(\ww)\propto \sum_{n=1}^N s(u_{n})^k + c\cdot N 
\end{equation}
for some constant $c>0$ and $k>1$. 
The above equation implies that high surprisal instances require disproportionately high processing effort from the language user.  
Rather, a uniform distribution of $\surp(u_n)$---which for fixed $N$ and total information is the unique minimizer of \cref{eq:cost}---would incur the least processing effort. Proof given in \cref{sec:unique-min-proof}.\looseness=-1

Due to its support by a number of studies, the UID hypothesis has received considerable recognition in the cognitive science community.
Such verifications, though, derive mostly from the tendencies implied by \cref{eq:cost}---as opposed to its direct verification. 
Take the original \citet{levy2007speakers} as an example: while they propose a formal operationalization of UID, they evaluate their hypothesis by analyzing a surprisal vs. sentence length trade-off rather than assessing the operationalization directly.
Furthermore, most UID studies investigate \emph{individual} word surprisals, without regard for their distribution within the sequence \citep[\textit{inter alia}]{aylett-2004,MAHOWALD2013313}.\looseness=-1

\section{Quantifying Linguistic Uniformity}\label{sec:sequence-level}

UID is, by its definition, a smoothing effect; it can be seen as a regression to a mean information rate---either measured as the surprisal per lexical unit (in written text, as we analyze here), or surprisal per time unit (in speech data).
However, there are multiple ways the hypothesis may be interpreted. 
As a concrete example, we turn to \citeposs{collins2014} fourth figure, which we recreate here in \cref{fig:collins}.
In its perhaps better-known form, UID suggests that language transmission should happen at a roughly constant rate, close to the channel capacity, i.e., there is a fixed (and perhaps cross-linguistic; \citealt{coupe2019different,pimentel+al.emnlp2021a}) value from which a unit's information density should never heavily deviate.
Under this interpretation, S1 (red) adheres more closely to UID, as information content per word varies less---in absolute terms---across the sentence. We can formalize this notion of UID using an inverse relationship to some per-unit distance metric $\dist(\cdot, \cdot)$ as follows:     \looseness=-1
\begin{equation}\label{eq:const_constant}
   \uidinv(\ww) = \frac{1}{N}\sum_{n=1}^{N} \dist(\surp(u_n), \mu_c)
\end{equation}
where $\mu_c$ is a target (mean) information rate---presumably at a theoretical channel's capacity. This mathematical relationship reflects the intuition that the further the units in a linguistic signal are from the average information rate $\mu_c$, the less the signal adheres to UID.

\begin{figure}
    \centering
    \includegraphics[width=\linewidth]{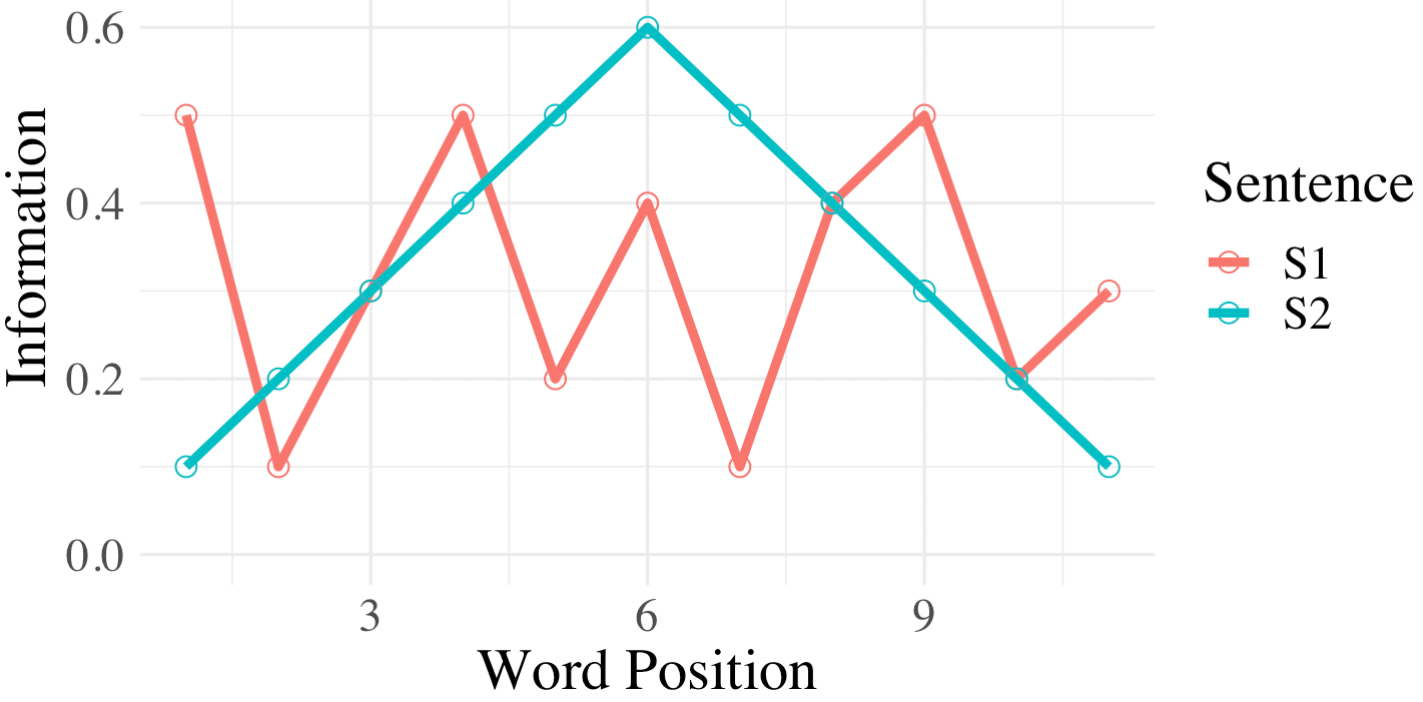}
    \caption{Information distribution across words of two hypothetical sentences. Recreation of Fig. 4 in \citeposs{collins2014}.} \label{fig:collins}
\end{figure}

We may, however, also interpret UID as a pressure to avoid rapidly shifting from information dense (and therefore cognitively taxing) sections to sections requiring minimal processing effort. 
Rather, in an optimal setting, there should be a smooth transition between information sparse and dense components of a signal. 
Under this interpretation, we might believe S2 (blue) to adhere more closely to UID, as local changes are gradual. We can formalize this version of UID as
\begin{equation}\label{eq:const_variability}
   \uidinv(\ww) = \frac{1}{N-1}\sum_{n=2}^{N} \dist(\surp(u_n), \surp(u_{n-1}))
\end{equation}
The difference between these two is concisely summarized as minimizing  global vs. local variability. The former definition has arguably received more attention; studies such as \citet{Frank2008SpeakingRU}, among others, analyze UID through regression towards a \emph{global} mean. Yet, there are arguments that variability should instead be measured locally \cite{collins2014,bloem-2016-testing}.\looseness=-1

\subsection{Regressing to Which Mean?}

Notably, there is an aspect of the global variability presented in \cref{eq:const_constant} that remains underspecified: what exactly is $\mu_c$? A mean information rate may be with respect to a phrase, a sentence or even a language as a whole; this rate could even span across languages, a definition that nicely aligns with recent cross-linguistic experiments on spoken language data that argue for a universal channel capacity \cite{pellegrino2011,coupe2019different}. 
Yet, the former definitions likewise seem plausible.\looseness=-1

To motivate this argument, consider the relationship between \emph{cadence} in literary writing and UID. We loosely define cadence as the rhythm and speed of a piece of text, which should have a close relationship to the dispersion of information. When writing prose, authors typically vary cadence across sentences, interspersing short, impactful (i.e., high information) sentences within series of longer sentences to avoid repetitiveness. We have done so here, in this paper. Yet, intuitively, this practice does not lead to particularly high processing costs, at least for a native speaker. Indeed, some would argue that such fluctuations make text \emph{easier} to read. This example motivates a pull towards a more context-dependent---perhaps sentence-level---rather than language-level mean information rate.\looseness=-1

While a number of findings undoubtedly demonstrate a pressure against high (and sometimes even inordinately low) surprisal---which aligns with the first (global) interpretation of the UID hypothesis---their experimental setups, in general, do not provide evidence for or against a more local interpretation, such as the one just described.\footnote{We attribute this to the fact that most of these analyses were performed at the word- rather than sequence-level.} 
We now define a number of UID operationalizations that encompass these different interpretations, subsequently analyzing them in \cref{sec:exps}.

\subsection{Operationalizing UID}

The first operationalization on which we will focus follows from \cref{eq:cost}, suggesting a \defn{super-linear} effect of surprisal on processing effort:\looseness=-1
\begin{equation}\label{eq:power}
    \uidinv(\ww) = \frac{1}{N}\sum_{n=1}^N s(u_{t})^k \quad\quad {\color{gray} (k > 1)}
\end{equation}
where $k$ controls the strength of super-linearity. 

A second operationalization, similar to \cref{eq:const_constant}, implies a pressure for \defn{mean regression}:
\begin{equation}\label{eq:var}
   \uidinv(\ww) = \frac{1}{N}\sum_{n=1}^{N} (\surp(u_n) - \mu)^2
\end{equation}
Note that we may take $\mu$ from a number of different contexts. For example, $\musent = \frac{1}{N}\sum_{n=1}^{N} \surp(u_n)$ for sentence $\langle u_1, \dots, u_N \rangle$ implies a \emph{sentence-level} mean regression, 
whereas average surprisal over an entire language $\mulang$ suggests a regression to a (perhaps language-specific) channel capacity. Both definitions more closely align with our global interpretation of UID, i.e., that S1 (red) of \cref{fig:collins} may exhibit a more ``uniform'' distribution of information.\looseness=-1

Similarly, we can compute the \defn{local variance} in a sentence as\footnote{\cref{eq:lc,eq:max} were originally used in \citet{collins2014}.}
\begin{equation}\label{eq:lc}
  \uidinv(\ww) = \frac{1}{N-1}\sum_{n = 2}^{N} (\surp(u_{n}) - \surp(u_{n-1}))^2
\end{equation}
\noindent which, in contrast to \cref{eq:power}, aligns more with our local interpretation of UID. 

We may also interpret UID as a pressure to minimize a signal's \defn{maximum} per-unit surprisal, as this may be a point of inordinately high cognitive load for the comprehender:%
\begin{equation}\label{eq:max}
    \uidinv(\ww) = \overset{N}{\max_{n=1}}\,\surp(u_n)
\end{equation}
For completeness, we further propose another potential measure of UID compliance inspired by the information-theoretic nature of UID. We consider the \defn{R\'enyi entropy} \citep{renyi1961measures} of a probability distribution $p$, defined as:

\begin{equation}\label{eq:renyi_eq}
   \mathrm{H}_k(p) = \frac{1}{1-k}\log \sum_{x\in \mathcal{X}} p(x)^k
\end{equation}
where $\mathcal{X}$ is the support of the distribution $p$. 
Notably, the R\'enyi entropy, which is maximized when $p$ is uniform, becomes the Shannon entropy in the limit as $k \rightarrow 1$.\footnote{We adopt this definition $\mathrm{H}(p) = - \sum_{x} p(x)\log p(x)$ when referring to \cref{eq:renyi_eq} for $k=1$.} However, for $k > 1$, high probability items contribute disproportionately to this sum, which in our context, would translate to an emphasis on \emph{low}-surprisal items. Thus, we do not expect it to be a good operationalization of (inverse) UID.  
However, the opposite holds for $k<1$, where R\'enyi entropy can be seen as producing an extra cost for low-probability, i.e., high-surprisal items. 
Thus, in terms of UID, we take:
\begin{equation}\label{eq:renyi}
    \uidinv(\ww) = \begin{cases}
\mathrm{H}_k(\hat p) & \textbf{if}\,\,\,k < 1\\
\mathrm{H}_k^{-1}(\hat p)& \textbf{otherwise}
\end{cases}
\end{equation}
where $\hat p$ is a distribution over $u_1, \dots, u_N$ normalized to sum to 1.\footnote{Since $p(\cdot \mid \ww_{<t})$ for $\langle u_1, \dots, u_N\rangle$ is not in itself a probability distribution, we must renormalize in order for this metric to have the properties exhibited by entropy.} 

\subsection{UID, Effort and Acceptability}
We now revisit the processing effort of a sentence, rewriting it in terms of our UID operationalizations\looseness=-1
\begin{equation}\label{eq:peffort2}
    \peffort(\ww) \propto \uidinv(\ww)\cdot N + c\cdot N 
\end{equation}
i.e., processing effort is proportional to the interaction between (i.e., multiplication by) $\uidinv$ and sentence length. Note that when using our operationalization of UID from \cref{eq:power}, this equation reverts to \citeposs{levy-thesis} original \cref{eq:cost}. Further, this equation with $k=1$ and $c=0$ recovers the hypothesis under surprisal theory. Following previous work \citep[][\emph{inter alia}]{frank-bod,goodkind-bicknell:2018predictive}, we then model reading time as $\reading(\ww) \propto \peffort(\ww)$; in words, (proportionally) more time is taken to read more cognitively demanding sentences.

We further consider the relationship between UID and linguistic acceptability; we posit that
\begin{equation}\label{eq:accept}
    \accept^{-1}(\ww) \propto \uidinv(\ww)\cdot N 
\end{equation}
\noindent i.e., the linguistic acceptability of a sentence has an inverse relationship with processing effort (withholding the additional penalty for length). 
Intuitively, sentences that are easier to process are more probably acceptable sentences, and vice versa. While not comprehensive, there is evidence that this simple model (at least to some extent) captures the relationship between these two variables \cite{Topolinski2009TheAO}.
Given these models, we now evaluate our different operationalizations based on their predictive power of psychometric variables.\looseness=-1

\section{Experiments}
\paragraph{Data.}
We employ reading time data in English from 4 corpora over 2 modalities: the Natural Stories \cite{futrell-etal-2018-natural2} and Brown \cite{smith2013-log-reading-time} Corpora, which contain self-paced reading time data, as well as the Provo \cite{provo} and Dundee Corpora \cite{dundee}, which contain eye movements during reading.\footnote{We additionally perform experiments using the GECO dataset \cite{geco}, an eye-tracking corpus with Dutch data. These results are shown in \cref{app:other_figs}.} 
For acceptability judgments, also in English, we use the Corpus of Linguistic Acceptability (CoLA; \citealt{warstadt2018neural}) and the \defn{\textsc{bnc}} dataset \cite{lau-2017-grammaticality}. Notably, Natural Stories and CoLA by design contain wide coverage of syntactic and semantic phenomena. We provide further details of each of these datasets, including pre-processing, statistics and data-gathering processes, in \cref{app:data}.

\subsection{Estimating Surprisal}
Since we do not have access to the ground-truth values of conditional probabilities of observing linguistic units given their context (i.e., surprisals), we must instead estimate these probabilities. This is typical practice in psycholinguistic studies  \cite{DEMBERG2008193,mitchell-etal-2010-syntactic,fernandez-monsalve-etal-2012-lexical}. For example, \citet{hale-2001-probabilistic} uses a probabilistic context-free grammar; \citet{smith2013-log-reading-time} use $n$-gram language models.\clara{would have been interesting to test this stuff out with parse steps, like with a PCFG} 

In general, the psychometric predictive power of surprisal estimates from a model correlates highly with model quality  \citep[as traditionally measured by perplexity;]{frank-bod,fossum-levy:2012cmcl,goodkind-bicknell:2018predictive}. Further, Transformer-based models appear to have superior psychometric predictive power in comparison to other architectures \cite{wilcox2020}. We employ GPT-2 \cite{radford2019language}, TransformerXL \cite{dai-etal-2019-transformer}, and BERT \cite{devlin-etal-2019-bert}---state-of-the-art language models\footnote{Notably, BERT is a cloze language model. Thus, the probabilities it provides are \emph{pseudo} surprisal estimates.}. We additionally include results using a 5-gram model, estimated using Modified Kneser--Essen--Ney Smoothing \cite{NEY19941}, to allow for an easier comparison with results from earlier works exploring UID in reading time data. All probability estimates are computed at the word-level.\footnote{
Given the hierarchical structure of language, there is not a single ``correct'' choice of linguistic unit over which language processing should be analyzed.
Here we consider the primary units in a linguistic signal to be words, where we take a sentence to be a complete linguistic signal.  
We believe similar analyses at the morpheme, subword or phrase level---which we leave for future work---may shed further light on this topic.\looseness=-1} Further details are given in \cref{app:data}.\looseness=-1

\subsection{Assessing Predictive Power}\label{sec:assess}

In our experiments, we analyze the ability of different functions of surprisal to predict psychometric data, namely the total time spent reading sentences in self-paced reading and eye tracking studies (see \cref{app:data})---and perceived linguistic acceptability,\footnote{Language models are trained to predict the probability of a sentence; the concept of linguistic acceptability is not explicitly part of their objective. As such, probability under a language model alone does not necessarily correlate well with acceptability \cite{lau-2017-grammaticality}.} in order to better understand the relationship of surprisal with language processing. For reading times, we use the sum across word-level times as our sentence-level metric. 
Notably for eye movement datasets, our analysis of sentence-level reading times is novel: previous work has generally focused on how long readers spend on a word \emph{before} progressing beyond it (often called the ``first pass;'' \cite{rayner:1998}), but sentence-level measures include time re-reading content after having progressed beyond it. Linguistic acceptability data are available and assessed only at the sentence-level.\looseness=-1
 
\begin{figure*}[]
    \centering
    \includegraphics[width=\textwidth]{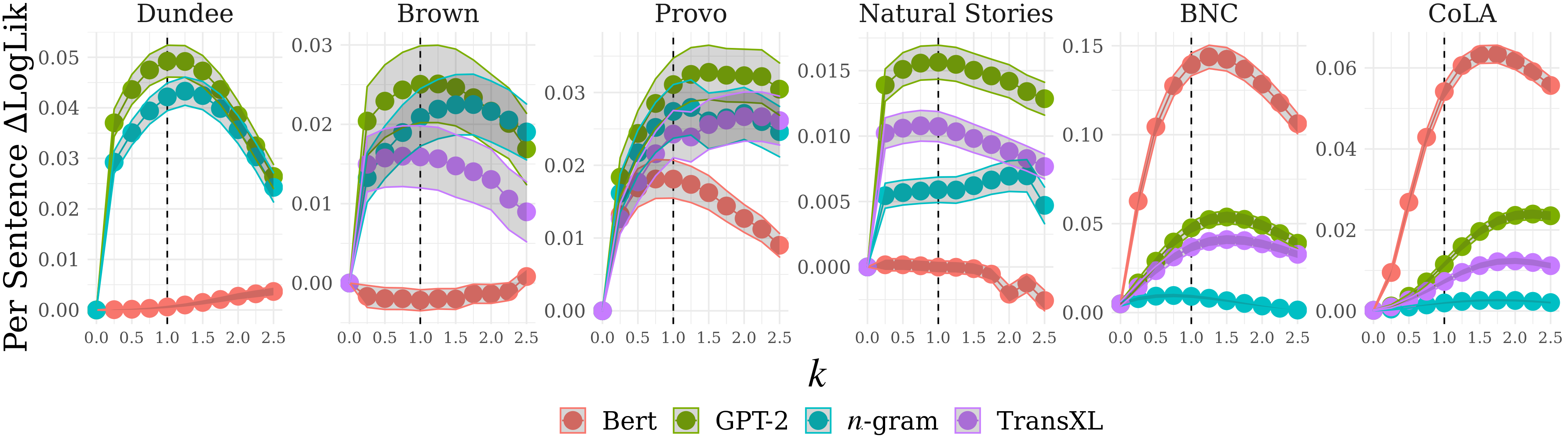}
    \caption{Mean $\deltaLL$ as a function of the exponent $k$ for the sentence-level predictor (\cref{eq:cost}) of reading time and linguistic acceptability. Shaded region connects standard error estimates from each point. We observe that often, our predictor with $k>1$ explains the data at least as well as $k=1$. Baseline models against which  $\deltaLL$ is computed are specified in \cref{sec:assess}. For reading times, the augmented models additionally contain fixed effects and per-subject random effects slopes for the UID operationalization; for acceptability judgments, only a fixed effect for the UID operationalization is added. 
    \looseness=-1 }
    \label{fig:all}
\end{figure*}

As we are interested in the relationship between UID and both reading times and acceptability judgments---in particular, the relationships described by \cref{eq:peffort2,eq:accept}---we turn to linear regression models.\footnote{While, for example, a multi-layer perceptron may provide more predictive power given the same variables, we may not be able to interpret the learned relationship as additional transformations of our independent variables would likely be learned. Using linear regression allows us to directly assess which functions of surprisal more accurately explain data under our linearity assumptions in \cref{eq:peffort2,eq:accept}.\looseness=-1} 
For reading time data, as our baseline models, we specifically use linear mixed-effects models, with random effect terms (slopes for total word count at the sentence-level and intercepts at the word-level) for each subject to control for individual reading behaviors.\footnote{Mixed-effects models allow us to incorporate both fixed and random effects into the modeling process, helping bring the conditional independence assumptions of the regression analysis better in line with the grouping structure of repeated-measures data.\looseness=-1} We additionally control for other variables known to influence reading time: at the sentence-level, our fixed effects include total word count and number of words with recorded fixations (per subject and sentence);\footnote{In natural reading, some words are never fixated (so-called skips). Hence, we include the number of fixated words in addition to actual sentence length.} results including fixed effects for sums of both individual word character lengths and word unigram log-probabilities (as estimated from WikiText 103; \citealt{wikitext103}) are given in \cref{app:other_figs}. At the word-level (only our last set of experiments), our fixed effects include linear terms for word log-probability, unigram log-probability, and character length, and the interaction of the latter two. We additionally include the same predictors from the previous word, a common practice due to known spillover effects observed in both types of measurement. These are standard predictors in reading time analyses \cite{smith2013-log-reading-time,goodkind-bicknell-2018-predictive,wilcox2020}. For linguistic acceptability data, we use logistic regression models with solely an intercept term as our baseline predictor; results when including summed unigram log-probability or sentence length as predictors yielded similar trends (see \cref{app:other_figs}).\looseness=-1

We evaluate each model relative to a baseline, containing only the control features just mentioned. Specifically, performance assessments are computed between models that differ by solely a single predictor; for reading time data, we include both a fixed and (per-subject) random slope for this predictor. Following \citet{wilcox2020}, we report $\deltaLL$: the mean difference in log-likelihood of the response variable between the two models. A positive $\deltaLL$ value indicates that a given data point is more probable under the comparison model, i.e., it more closely fits the observed data. To avoid overfitting, we compute $\deltaLL$ solely on held-out test data, averaged over 10-fold cross validation. See \cref{app:data} for evaluation details.

\subsection{Results}\label{sec:exps}

\paragraph{Evidence of UID in Reading Times and Acceptability Judgments.}
We first assess the ability of our processing cost model (\cref{eq:cost}) to predict reading times. In a similar fashion, we use \cref{eq:accept} with \cref{eq:power} to predict acceptability scores. Recall from \cref{sec:background} that if the true relationship between surprisal and sequence-level processing effort is expressed by \cref{eq:cost} with $k>1$, then there must exist a pressure towards uniform information density. Thus, if we observe that a linear model using $\sum_{n=1}^N s(u_n)^k$ as a predictor explains the observed data better when $k> 1$, it suggests a preference for the uniform distribution of information in text.\footnote{This of course is under the assumption that when $k>1$, the coefficient for the term is positive for reading time, i.e., higher values correlate with longer reading time, and negative for acceptability judgments, i.e., higher values correlate with lower acceptability scores. Notably, the opposite logic holds for $k<1$: we would expect coefficients to be flipped if it provides better predictive power than $k=1$.}
We report results for multiple corpora in \cref{fig:all}.\footnote{We also perform experiments using additional predictors and on the Dutch GECO corpus, finding consistent results. See \cref{app:other_figs}.\looseness=-1}

We see that in general, the best fit to the data is achieved not when our cost equations use $k=1$, but rather a slightly larger value of $k$ (see also \cref{tab:sentence-level}). Notably for reading time data, a conclusion that $k>1$ is optimal contradicts a number of prior works that have judged the relationship between surprisal and reading time to be linear. 
We discuss this point further in \cref{sec:discussion}. Yet for the reading time datasets, the $k=1$ predictor is typically still within the standard error of the best predictor, meaning that the linear hypothesis is not ruled out. For acceptability data, we see more distinctly that $k>1$ leads to the best predictor, especially when using true surprisal estimates (i.e., models aside from BERT). This result suggests that a more uniform distribution of information more strongly correlates with linguistic acceptability (see also \cref{fig:cola} for explicit correlation analysis).

We perform hypothesis tests to formally test whether our models of processing cost and linguistic acceptability have higher predictive power---as measured by $\deltaLL$---when using a super-linear vs. linear function of surprisal. Specifically, we take our null hypothesis to be that $k=1$ provides better or equivalent predictive power to $k>1$. We use a paired t-tests, where we aggregate sentence-level data across subjects for reading time datasets so as not violate independence assumptions. We use a Bonferroni correction to account for the consideration of multiple models with $k>1$. We find that we consistently reject the null hypothesis at significance level $\alpha = 0.001$ for acceptability data experiments (aside from under the $n$-gram model). For reading time data, we never reject the null hypothesis, again confirming that the linear hypothesis may hold true in this setting.

Another important observation is that the pseudo log-probability estimates from a cloze language model (BERT) work remarkably well when used to predict acceptability judgments, yet remarkably poorly for reading time estimates. We also see a less super-linear effect (higher predictive power for $k \approx 1$) of surprisal in sentence acceptability for cloze than for auto-regressive models.\footnote{\citet{schrimpf2020neural} found GPT-2 superior to BERT for encoding models to predict brain response during language comprehension. We leave further exploration of the general issue for future work.\looseness=-1}\looseness=-1
\begin{table*}[!t]
  \centering
  \small
  \begin{adjustbox}{max width=\textwidth}
  \begin{tabular}{ lrrrrrr }
   \toprule
  \multirow{2}{*}{{\bf Predictor}} &  \multicolumn{4}{c}{Reading Time} & \multicolumn{2}{c}{Acceptability} \\
      & \multicolumn{1}{c}{Dundee} &\multicolumn{1}{c}{Brown} & \multicolumn{1}{c}{Provo}  &\multicolumn{1}{c}{NS} &  \multicolumn{1}{c}{CoLA} & \multicolumn{1}{c}{\textsc{bnc}} \\
     \hline\\[-1.5ex]
   $\mathrm{Super}\text{-}\mathrm{Linear}$ ($k=0.25$)	& $	3.70\db{0.27}	$&$	1.88\db{0.44}	$&$	1.73\db{0.27}	$&$	1.40\db{0.12}	$&$	0.90\db{0.03}	$&$	6.11\db{0.13}	$\\
$\mathrm{Super}\text{-}\mathrm{Linear}$ ($k=1$)	& $	\bf{4.93}\db{0.32}	$&$	2.38\db{0.48}	$&$	3.07\db{0.36}	$&$	\bf{1.58}\db{0.13}	$&$	5.28\db{0.07}	$&$	13.89\db{0.19}	$\\
$\mathrm{Super}\text{-}\mathrm{Linear}$ ($k=1.25$)	& $	\bf{4.93}\db{0.31}	$&$	\bf{2.39}\db{0.49}	$&$	3.24\db{0.37}	$&$	1.55\db{0.13}	$&$	5.92\db{0.07}	$&$	\bf{14.35}\db{0.19}	$\\
$\mathrm{Super}\text{-}\mathrm{Linear}$ ($k=1.5$)	& $	4.74\db{0.31}	$&$	2.34\db{0.49}	$&$	\bf{3.25}\db{0.37}	$&$	1.50\db{0.13}	$&$	\bf{6.18}\db{0.07}	$&$	14.22\db{0.19}	$\\
$\mathrm{Super}\text{-}\mathrm{Linear}$ ($k=2$)	& $	3.85\db{0.28}	$&$	2.11\db{0.47}	$&$	3.22\db{0.36}	$&$	1.40\db{0.13}	$&$	6.04\db{0.07}	$&$	12.75\db{0.18}	$\\
$\mathrm{Variance}$ (lang)	& $	2.37\db{0.22}	$&$	1.37\db{0.39}	$&$	2.46\db{0.33}	$&$	0.73\db{0.10}	$&$	5.64\db{0.07}	$&$	11.26\db{0.17}	$\\
$\mathrm{Variance}$ (sent)	& $	2.01\db{0.20}	$&$	1.16\db{0.35}	$&$	2.59\db{0.34}	$&$	0.80\db{0.11}	$&$	1.86\db{0.04}	$&$	7.56\db{0.14}	$\\
$\mathrm{Local Variance}$	& $	1.93\db{0.20}	$&$	1.08\db{0.36}	$&$	2.15\db{0.30}	$&$	0.64\db{0.09}	$&$	1.44\db{0.04}	$&$	4.88\db{0.12}	$\\
$\mathrm{Max}$	& $	1.74\db{0.20}	$&$	1.11\db{0.39}	$&$	1.17\db{0.27}	$&$	0.68\db{0.12}	$&$	1.16\db{0.03}	$&$	5.00\db{0.12}	$\\
$\mathrm{Entropy}$ ($k=0.25$)	& $	1.16\db{0.16}	$&$	0.30\db{0.24}	$&$	1.35\db{0.22}	$&$	0.25\db{0.13}	$&$	0.02\db{0}	$&$	0.03\db{0.01}	$\\
\makecell[l]{$\mathrm{Entropy}$ ($k=1$) {\scriptsize Shannon}}	& $	-0.01\db{0.01}	$&$	0\db{0}	$&$	0.01\db{0}	$&$	0\db{0}	$&$	0\db{0}	$&$	7.90\db{0.14}	$\\
\makecell[l]{$\mathrm{Entropy}$ ($k=2$) {\scriptsize  Renyi}}	& $	-0.01\db{0}	$&$	0\db{0.01}	$&$	0\db{0.01}	$&$	0\db{0}	$&$	0\db{0}	$&$	8.38\db{0.14}	$\\
    \bottomrule
  \end{tabular} 
  \end{adjustbox}
  \caption{ $\deltaLL$ in $10\text{e-}2$ nats when adding different UID operationalizations as predictors of reading time and linguistic acceptability. Surprisal estimates from GPT-2 are used. We use the same paradigm for baseline and augmented models as in \cref{fig:all}. Other setups show similar trends (\cref{app:other_figs}).\looseness=-1}
  \label{tab:sentence-level}
\end{table*}\roger{This table and Figure 3 still don't look totally aligned, eg Provo is optimal at k=2 in the table but in the figure it looks like it's k=1.75?}\clara{yes, but k=1.75 isn't included in the table :/ I skipped it for space. Do you think I should add it in?}

\paragraph{Evaluating Operationalizations of UID.}
We next ask: what are appropriate measures of UID in a linguistic signal? In an effort to answer this question, we explore the predictive power of the different operationalizations of UID proposed in \cref{sec:sequence-level} for our psycholinguistic data; given our evidence of UID in the prior section, we posit that better operationalizations should likewise provide stronger explanatory power than poor ones. We again fit linear models using \cref{eq:peffort2,eq:accept}, albeit with each analyzed UID operationalization as our predictor. We use surprisal estimates from GPT-2, as it was consistently the autoregressive language model with the best predictive power.\looseness=-1

Results in \cref{tab:sentence-level} show that, in general, the family of $\mathrm{Super}\text{-}\mathrm{Linear}$ (\cref{eq:power}) operationalizations (for $k\geq 1$) and a language-wide notion of $\mathrm{Variance}$ (\cref{eq:var}) provide the largest increase in explanatory power relative to the baseline models, suggesting they may be the best quantifications of UID. While the $\mathrm{Max}$ (\cref{eq:max}) and $\mathrm{Variance}$ (\cref{eq:var}) predictors also provide good explanatory power, they are consistently lower across datasets. 
Further, language-level $\mathrm{Variance}$ seems to produce stronger predictors for psychometric data than sentence-level and $\mathrm{Local\,Variance}$---an observation driving our next set of experiments. 
Notably, the $\mathrm{Entropy}$ predictors do quite poorly in comparison to other operationalizations, especially for $k\geq 1$.\footnote{While this could be attributed to the artificial normalization of $s(u_1), \dots, s(u_n)$ that must occur to generate a valid probability distribution, we saw similar trends when using the original, unnormalized distribution $s(u_1), \dots, s(u_N)$.} These results suggest that a sentence-level notion of entropy may not capture the UID phenomenon well, which is perhaps surprising, given that it is a natural measure of the uniformity of information.

\paragraph{Exploring the Scope of UID's Pressure.}\roger{Isn't this paragraph now outdated/superseded given the new Table 1 results? Should we just ditch it?}\clara{I don't think its outdated, actually. The results in Table 1 are not too different than they were before. Variance was never the best operationalization but its the operationalization that gives us the chance to ask this question}
Each of our operationalizations in \cref{sec:sequence-level} are computed at the sequence-level. Thus, it is natural to ask, what should be the scope of a sequence when considering information uniformity?  
In an effort to answer this question, we explore how the predictive power of our UID operationalizations change as we vary the window sizes over which they are computed. 
Specifically, we will look at ability to predict per-word reading times; we make use of the $\mathrm{Variance}$ operationalization as our predictor (which demonstrated good performance in our sentence-level experiments) albeit with a word-level version:
\begin{equation}\label{eq:var_word}
    \uidinv(u_n) = (\surp(u_n) - \mu)^2
\end{equation}
where $\mu$ is mean surprisal computed across the previous $1,2,3,4$ or $n$ words or across the sentence, document, or language as a whole (as with unigram probabilities, $\mulang$ is computed per model over WikiText 103). 
%
\cref{tab:sentence-level} and \cref{fig:windows} show evidence that the pressure for uniformity may in fact be at a more global scale. Under each corpus, the higher-level predictors of UID appear to provide better explanatory power of reading times than more local predictors.\looseness=-1  
\begin{figure}
    \centering
    \includegraphics[width=\linewidth]{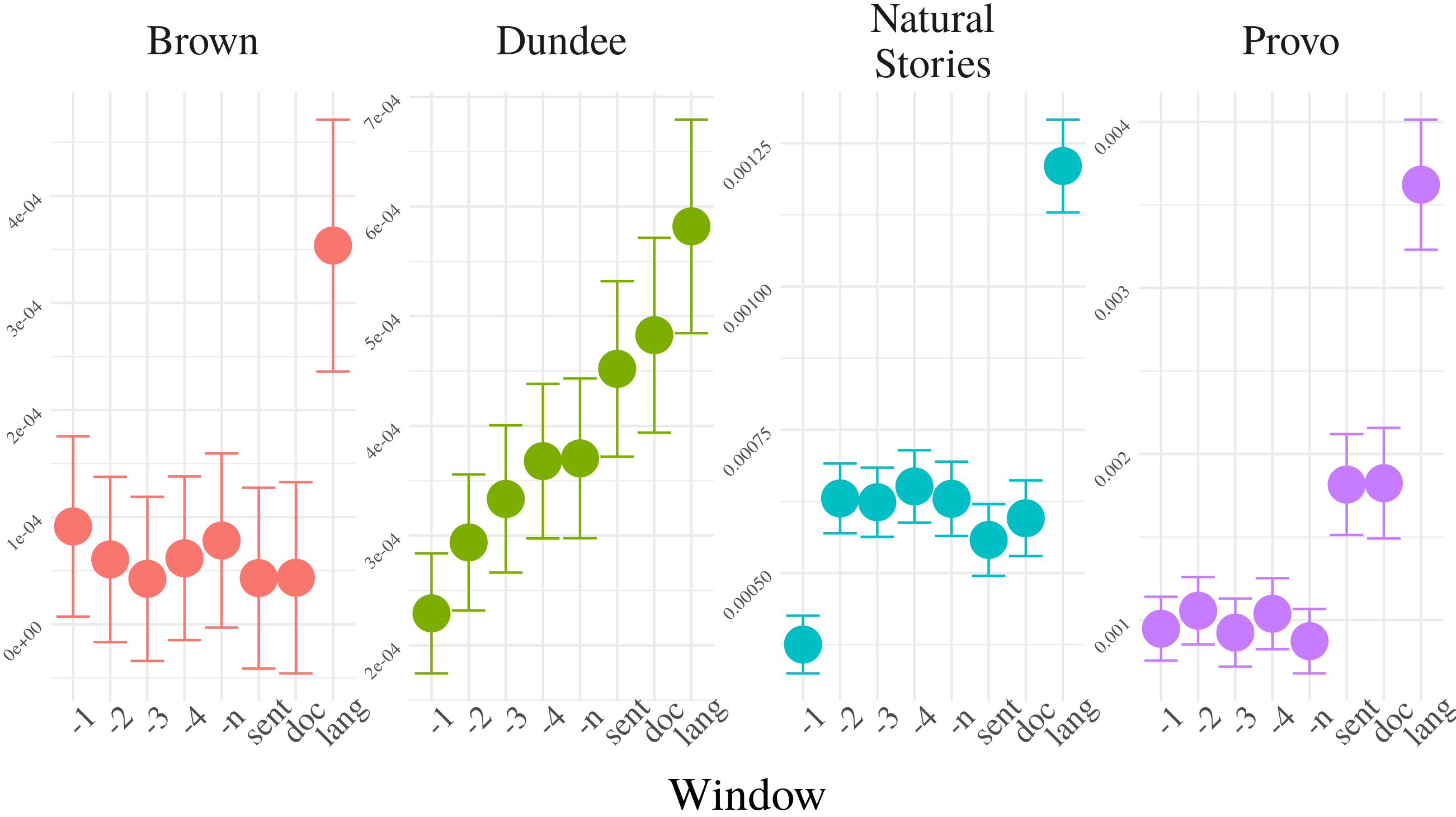}
    \caption{Per-token $\deltaLL$ when changing the scope over which UID variance is computed (see \cref{eq:var_word}). Surprisal estimates from GPT-2 are used. Baseline predictors are specified in \cref{sec:assess}} \label{fig:windows}
\end{figure}

\section{Discussion }\label{sec:discussion}
Most previous works investigating UID have looked for its presence in language production \cite[\textit{inter alia}]{bell2003effects,aylett-2004,levy2007speakers,MAHOWALD2013313}, while comprehension has received little attention.
\citet{collins2014} and \citet{sikos2017information} are perhaps the only other works to find results in support of UID in this setting. 
Our findings are complementary to theirs; we take different analytical approaches but both observe a preference for the uniform distribution of information in a linguistic signal, although a similar analysis should be performed in the spoken domain before stronger conclusions can be drawn. 

While our reading time results do not refute previous work showing linear effects of surprisal on word-level reading times \cite{smith2013-log-reading-time,goodkind-bicknell-2018-predictive, wilcox2020},\footnote{Notably, \citet{BROTHERS2021104174} have recently reported a \emph{linear} effect of word probability on (self-paced) reading times in a controlled experiment where within each experimental item the target word was held constant and predictability was manipulated across a wide range by varying the preceding context. Motivated by this result, we repeated our analytic pipeline testing a range of values of $k$ but replacing surprisals with negative \emph{raw} probabilities. The resulting regression model fits are not as good as those achieved when using surprisals (\cref{fig:probs}; compare $y$-axis ranges with \cref{fig:all}).\looseness=-1} we see some suggestions that a super-linear hypothesis is also plausible, especially in the Provo corpus.
 Notably, most of these works did not test a parametric space of non-linear functional forms, instead confirming using visual inspection of the results of nonparametric fits. One exception, \citet{smith2013-log-reading-time}, 
explored the effects of adding a quadratic term for surprisal as a predictor of per-word reading times. Yet, if the true $k$ that describes the reading times--surprisal relationship were only slightly greater than 1, as our results suggest, this quadratic test might be too restrictive. 
Our approach, which explores a more fine-grained range of $k$, is potentially more comprehensive, and indeed we find that values of $k$ slightly greater than $1$ often fit the data at least as well as $k=1$, and can certainly not be ruled out. 
Other potential virtues of our analysis are (1) Our analysis is performed at the sentence- (rather than word-) level. This is arguably a better method for analyzing a sequence-level phenomenon, i.e., UID, 
and (2) specifically for eye movement data, we include re-reading times after the first pass.\looseness=-1

\paragraph{Limitations and Future Directions.}
A major limitation of this work is that the experimental analysis is limited to English (and Dutch, in the Appendix); while the pressure for uniformity---since explained by a cognitive process---should hold across languages, further experiments should be performed to verify these findings, especially since the relationship between model quality and psychometric predictive power has recently been called into question \cite{kuribayashi-etal-2021-lower}. As such, while we find convincing preliminary evidence in our analyzed languages, we are not able to fully test the hypothesis that the pressure for UID is at the language-level. Further, we have no evidence as to whether there may be pressure towards a \emph{cross-linguistic} $\mu_c$, which would be relevant to cross-linguistic interpretations of UID \cite{pimentel+al.emnlp2021a}.

Another important limitation of this work is the restriction to psychometric data from the written domain. To fully grasp the effects of the distribution of information in linguistic signals on language comprehension, spoken language data should be similarly analyzed. Of course, different factors are likely at play in language comprehension in the spoken domain, including e.g., the cognitive load of the speaker \cite{pijpops2018comparing}; such factors may make it even more difficult to disentangle the contribution of different effects to comprehension. We leave this analysis for future work.\looseness=-1

\section{Conclusion}
In this work, we revisit the UID hypothesis, providing both a quantitative and qualitative assessment of its various interpretations. We find suggestions that the UID formulation proposed in \citet{levy-thesis} may better predict processing effort in language comprehension than alternative formulations since proposed. We additionally find that a similar model explains linguistic acceptability judgments well, confirming a preference for UID in written language. We subsequently evaluate different operationalizations of UID, observing that a super-linear function of surprisal best explains psychometric data. Further, operationalizations associated with global interpretations of UID appear to provide better explanatory power than those of local interpretations, suggesting that perhaps the most accurate interpretation of UID should be the regression towards the mean information rate of a language.

\section*{Acknowledgments}
We thank our anonymous reviewers, who provided invaluable feedback on the manuscript for this work. Lena J\"ager was partially funded by the German Federal Ministry of Education and Research under grant 01\textpipe S20043. RPL acknowledges support from the MIT–IBM AI Lab, the MIT Quest for Intelligence, and NSF award BCS-2121074.

\bibliographystyle{acl_natbib}
\bibliography{anthology,acl}

\clearpage
\newpage

\appendix
\section{Theory}\label{sec:unique-min-proof}

We use the standard definition of surprisal $\surp(u_n) \defeq - \log p(u_n \mid \mathbf u_{< n})$, and define $\surp (\ww)= \sum_{n=1}^N s(u_n)$ as the total surprisal of the entire signal $\ww$.\looseness=-1

\begin{theorem}
Assume a fixed $k > 1$ and $c > 0$, and assume $N \geq 1$. Then,
\begin{enumerate}[label=\roman*)]
\item The objective $\sum_{n=1}^N s(u_{n})^k + c\cdot N$, i.e. \cref{eq:cost}, subject to the constraint of a fixed  $s(\ww) = \sum_{n=1}^N s(u_n)$, is minimized when information is uniformly distributed, i.e. $s(u_1) = s(u_2) = \cdots = s(u_N) = s(\ww)/N$;
\item Furthermore, this minimal value is found for either one or two choices of finite $N$.
\end{enumerate}
\end{theorem}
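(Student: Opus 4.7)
My approach is to decouple the two parts: part (i) fixes $N$ and the total surprisal $S := s(\ww)$ and becomes a standard convex-optimization exercise, while part (ii) substitutes the minimizing value back in and reduces the problem to minimizing a one-dimensional strictly convex function over the positive integers.

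For part (i), I would fix $N$ and $S$ and consider the objective $\sum_{n=1}^{N} s(u_n)^k$ (the $cN$ term is constant for fixed $N$) subject to $s(u_n) \geq 0$ and $\sum_n s(u_n) = S$. Since $k > 1$, the map $x \mapsto x^k$ is strictly convex on $[0,\infty)$, so Jensen's inequality gives
\begin{equation*}
\frac{1}{N}\sum_{n=1}^{N} s(u_n)^k \;\geq\; \left(\frac{1}{N}\sum_{n=1}^{N} s(u_n)\right)^{\!k} \;=\; (S/N)^k,
\end{equation*}
with equality iff all $s(u_n)$ are equal, i.e.\ $s(u_n) = S/N$ for every $n$. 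Multiplying through by $N$ yields the minimum value $S^k N^{1-k}$. (Equivalently, a Lagrange-multiplier argument on the relative interior of the simplex $\{\mathbf{s} : s_n \geq 0, \sum_n s_n = S\}$ gives the same conclusion without boundary issues, since the minimizer is strictly interior whenever $S > 0$.)

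For part (ii), I substitute the uniform allocation back into the objective to obtain the reduced function
\begin{equation*}
g(N) \;=\; S^k N^{1-k} + cN,
\end{equation*}
which I then want to minimize over $N \in \{1,2,3,\dots\}$. Extending $g$ to $N \in (0,\infty)$, I compute $g'(N) = -(k-1)\,S^k N^{-k} + c$ and $g''(N) = k(k-1)\,S^k N^{-k-1} > 0$, so $g$ is strictly convex on the positive reals, with a unique continuous minimizer $N^\star = S\,((k-1)/c)^{1/k} \in (0,\infty)$. By strict convexity, $g$ is strictly decreasing on $(0, N^\star]$ and strictly increasing on $[N^\star, \infty)$, so the integer minimum lies in $\{\lfloor N^\star\rfloor, \lceil N^\star\rceil\} \cap \{1,2,\dots\}$; this set has either one or two elements (two only in the boundary case $g(\lfloor N^\star\rfloor) = g(\lceil N^\star\rceil)$ with both at least $1$). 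When $N^\star < 1$ the strictly-increasing branch forces the integer minimum to be the unique value $N = 1$.

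The argument is essentially routine: the main (minor) subtlety will be handling the integer-restriction cleanly—specifically arguing that strict convexity of $g$ on the reals rules out three or more tied integer minima, and treating the edge case $N^\star < 1$ so that ``one or two choices of finite $N$'' holds uniformly. Everything else is a direct application of Jensen's inequality and elementary calculus.
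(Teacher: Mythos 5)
Your proposal is correct and follows essentially the same route as the paper's own proof: Jensen's inequality applied to the convex map $x \mapsto x^k$ for part (i), then substitution of the uniform allocation, a continuous extension in $N$ with first/second derivatives yielding the unique real minimizer $N^\star = \left(\tfrac{k-1}{c}\right)^{1/k} s(\ww)$, and an appeal to strict convexity to conclude the integer optimum lies at $\lfloor N^\star \rfloor$, $\lceil N^\star \rceil$, or is forced to $N=1$ when $N^\star < 1$. Your explicit remark that strict convexity rules out three or more tied integer minima is a small tightening of an argument the paper leaves implicit, but the substance is identical.
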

\begin{proof}
We prove i) and ii) separately.

\paragraph{i).} This was proven in the Appendix of \citet{levy2007speakers} as a simple application of Jensen's inequality, which we reproduce here in largely similar form (adapting to our notation). 
First note that the function $(\cdot)^k$ is convex on the interval $[0,\infty)$ for $k > 1$; as surprisal can only take on positive values, this is the interval we operate over. 
Since $\sum_{n=1}^N \frac{1}{N} = 1$ and $\frac{1}{N} \geq 0$, we have that
$\sum_{n=1}^N \frac{\surp(u_n)^k}{N}$ is a convex combinations of the exponentiated surprisals $\surp(u_n)^k$.
Thus, as we have a convex combination of convex functions, we may invoke Jensen's inequality, which yields
\begin{equation}
  \sum_{n=1}^N \frac{\surp(u_n)^k}{N} \geq \left(\frac{s(\ww)}{N} \right)^k
\end{equation}
Multiplying both sides by $N$ gives
\begin{equation}\label{eq:final-result}
  \sum_{n=1}^N \surp(u_n)^k \geq N \left(\frac{s(\ww)}{N} \right)^k
\end{equation}
The lower bound of \cref{eq:final-result} tells us that uniformly distributed information, i.e. where each $s(u_n) = s(\ww) / N$  is the lowest cost manner to distribute total surprisal over the utterance. 
Conversely, when $0<k<1$, $(\cdot)^k$ is concave on the interval $[0,\infty)$. Therefore, the same logic gives us the opposite result: Uniform information density is the \emph{highest} possible cost way to distribute total surprisal over the utterance.\looseness=-1

\paragraph{ii).} 
As shown in the previous step, regardless of the value of $N$, $\peffort$ is minimized when information density is uniform---that is, when $s(u_{n})=s(\ww)/N$---giving us:\ryan{Do we want to say what we plugged $s(u_{n})=s(\ww)/N$ into to get the equation below? It comes out of nowhere!}
\begin{subequations}
\begin{align}
\peffort & = N \left[\frac{s(\ww)}{N}\right]^k + c\cdot N  \\
& = \frac{s(\ww)^k}{N^{k-1}} + c\cdot N.
\end{align}
\end{subequations}
We now consider the question of what value of $N$ minimizes $\peffort$. A continuous extension of $\peffort$ to real-valued $N$ has the following first and second derivatives:
\begin{subequations}
\begin{align}\label{eq:first-derivative}
\frac{\partial \peffort}{\partial N} &= -(k-1) \frac{s(\ww)^k}{N^{k}} + c \\
\frac{\partial ^2\peffort}{\partial N^2} 
&= k(k-1) \frac{s(\ww)^k}{N^{k+1}}\label{eq:ssecond-derivative}
\end{align}
\end{subequations}
We can use these derivatives to inspect the behavior of the function.
First, the second derivative is strictly positive, thus processing effort is strictly convex in $N$ so it has at most one global minimum.
Second, we can find the minimizing value of $N$ by setting the first derivative to zero, giving us:\ryan{I think this might still not be right. Shouldn't the root $k$ cancel with the $s(u_n)^k$? Seems like an algebra mistake. (Roger agrees, and has implemented this correction -- RPL)}\tiago{Yeah, it was wrong, my bad.}
\begin{align}
N^\star = \left(\frac{k-1}{c} \right)^{\frac{1}{k}} s(\ww)
\end{align}
However, since this is a constrained optimization problem ($N \geq 1$), we arrive at the solution
\begin{equation}
N^\star = \max \left(1, \left(\frac{k-1}{c} \right)^{\frac{1}{k}} s(\ww)\right)
\end{equation}
which is true because the first derivative will be strictly positive for any value of $N$ above its global minimum $\left(\frac{k-1}{c} \right)^{\frac{1}{k}} s(\ww)$.
Now, to address the finiteness of $N^\star$, we observe that as $N \rightarrow \infty$, we have $\frac{\partial \peffort}{\partial N} \rightarrow c > 0$ so the function cannot achieve its minimum as $N \rightarrow \infty$.\tiago{Is this sentence necessary after we have the value of N* and the fact that the second derivative is strictly positive?} 
Returning to integer-valued $N$, we have that processing effort is minimized either at floor($N^\star$), ceiling($N^\star$), or both.
Finally, it is important to highlight that if the first derivative (i.e., \cref{eq:first-derivative}) is positive at $N=1$, we arrive at the result that processing effort is minimized at $N=1$. 
This will happen when $s(\ww)$ is sufficiently small and/or $c$ is sufficiently large: the amount of information to be communicated is not worth the cost of using more than a minimal-length utterance.

Note also that for $0 < k < 1$, when $(\cdot)^k$ is concave, we obtain a different, and counter-intuitive result: the first derivative is \emph{always} positive, meaning that processing effort is minimized at $N=1$ regardless of $s(\ww)$ or $c$.

\end{proof}

\section{Datasets and Language Models }\label{app:data}
\begin{table}
  \centering
  \adjustbox{max width=\linewidth}{
  \begin{tabular}{lrrrrr}
   \toprule
    {\bf Dataset } & Types (\textsc{m})  & Types (\textsc{u}) & Sents (\textsc{m}) & Sents (\textsc{u}) & Docs (\textsc{u})   \\
     \hline
 \makecell[l]{Natural\\Stories} & 848,852&10,256 & 41,788&485 & 10\\
 Provo & 225,624& 2,745 &11,340 & 2,689& 55\\
 Dundee &614,689 & 51,501 & 23,777 &2,377 & 20\\
 Brown &547,628 & 7,234 & 34,284 & 1,800 & 13\\
 CoLA &- &65,809 & 10,657& 10,657& -\\
 \textsc{bnc} & - & 43,318 & 2,500 & 2,500 & -\\
 \bottomrule
 
  \end{tabular} }
  \caption{ Dataset statistics. \textsc{u} refers to \emph{unique} counts while \textsc{m} refers to \emph{measured} counts, i.e. number of collected data points. }
  \label{tab:data}
\end{table}
\paragraph{Data pre-processing.} Text from all corpora was pre-processed using the Moses decoder\footnote{\url{http://www.statmt.org/moses/}} tokenizer and punctuation normalizer. Additional pre-processing was performed by the Hugging Face tokenizers for respective neural models. Capitalization was kept intact albeit the lowercase version of words were used in unigram probability estimates. We estimate the unigram distribution following \citet{nikkarinen+al.acl-findings2021}. Sentences were delimited using the NLTK sentence tokenizer.\footnote{\url{https://www.nltk.org/api/nltk.tokenize.html}} For reading time datasets, we removed outlier word-level data points (specifically those with a $z$-score $>3$ when the distribution of reading times was modeled as log-linear). We omitted the sentence-level reading time for a specific subject from our analysis if it contained any outlier data points.

The \defn{Natural Stories} consists of a series of English texts that were hand-edited to contain low-frequency syntactic constructions while still sounding fluent to native speakers. It contains 10 stories with a total of 485 sentences. Self-paced reading data from these texts was collected from 181 native English speakers. The appeal of this corpus lies in that it provides psychometric data on unlikely---but still grammatically correct---sentences, which in theory should provide broader coverage of the sentence processing spectrum. 

The \defn{Provo Corpus} consists of 55 paragraphs of English text (with a total of 2,689 sentences) taken from various sources and genres, including online news articles, popular science, and fiction. Eye movement data while reading from 84 native speakers of American English was collected using a high-resolution eye tracker (1000 Hz). We specifically use the \textsc{ia-dwell-time} attribute as our measure of per word reading time; specifically, we use the summation of the duration across all fixations on that word. We find noisier trends when using \textsc{ia-first-run-dwell-time} and \textsc{ia-first-fixation-duration} (see \cref{app:other_figs}). 

The English portion of the \defn{Dundee Corpus} contains eye-tracking recordings (1000 Hz) of 10 native English-speakers each reading 20 newspaper articles from \emph{The Independent}, with a total of 2,377 sentences. Unlike in previous studies (e.g. \citet{goodkind-bicknell-2018-predictive}) we did not exclude any words from the dataset, as we were interested in sentence-level measures. As with the Provo corpus, we use total dwell time as our dependent variable.

The \defn{Brown Corpus} consists of self-paced reading data for selections from the Brown corpus of American English. Moving-window self-paced reading times were measured for 35 UCSD undergraduate native English speakers, each reading short (292–902 word) passages drawn from the Brown corpus of American English (total of 1,800 unique sentences). Data from participants were excluded if comprehension–question performance was at chance. Further details about the procurement of the dataset are described in \cite{smith2013-log-reading-time}.

The Dutch portion of the \defn{GECO}---Ghent Eye-Tracking Corpus---contains eye-tracking recordings from bilingual (Dutch/English) participants reading a portion of a novel, presented in paragraphs on the screen. 

For  CoLA, sentences are taken from published linguistics literature and labeled by expert human annotators. According to the authors, ``unacceptable sentences in CoLA tend to be maximally similar to acceptable sentences and
are unacceptable for a single identifiable reason,'' which implies that differentiability should be nuanced rather than, e.g.,  from a blatant disregard for grammaticality. 
We also utilize the \defn{\textsc{bnc}} dataset \cite{lau-2017-grammaticality}, which consists of 2500 sentences taken from the British National Corpus. Each sentence is round-trip machine-translated and the resulting sentence is annotated with acceptability judgments through crowd-sourcing. Two rating systems are provided for this corpus: \textsc{mop2} and \textsc{mop4}. The former provides binary judgments of acceptability while the latter provides a score from 1-4. We employ the former in our predictive power experiments so as to share the same setup for the CoLA dataset; we use the latter in computations of correlation. 

For probability estimates from neural models, we use  pre-trained models provided by Hugging Face \cite{wolf-etal-2020-transformers}. Specifically, for GPT-2, we use the default OpenAI version (\texttt{gpt2}). The model was trained on the WebText dataset (a diverse collection of approximately 8 million websites); it uses byte-pair encoding \citep{sennrich2015neural} with a vocabulary size of 50,257. For the TransformerXL, we use a version of the model (architecture described in \citet{dai-etal-2019-transformer}) that has been fine-tuned on WikiText-103 (\texttt{transfo-xl-wt103}). We use the \texttt{bert-base-cased} version of BERT. In all cases, per-word surprisal is computed as the sum of subword surprisals. We additionally train a 5-gram model on WikiText-103 using the KenLM \cite{kenlm} library with default hyperparemters for Kneser--Essen--Ney smoothing.

\paragraph{Evaluation.}
For our evaluation metric, we use $\deltaLL$: the mean difference in log-likelihood of the response variable between a baseline model and a model with an additional predictor. A positive $\deltaLL$ value indicates that a given data point is more probable under the comparison model, i.e., the comparison model more closely fits the observed data. To compute $\deltaLL$ for each data point, we split our corpus into 10 folds. Folds are chosen randomly, i.e., they are not based on subject or sentence for mixed-effects models. The same splits are used for each model. We take the $\deltaLL$ value for a data point to be the difference in log-likelihood between models trained on the 9 folds that \emph{do not} contain that data point, so as to avoid overfitting. We then take the mean $\deltaLL$ over the corpus as our final metric.

\newpage
\onecolumn
\section{Additional Results}\label{app:other_figs}

\begin{figure}[!h] \label{fig:slor}
    \centering
    \begin{minipage}{0.55\textwidth}
    \includegraphics[width=\textwidth]{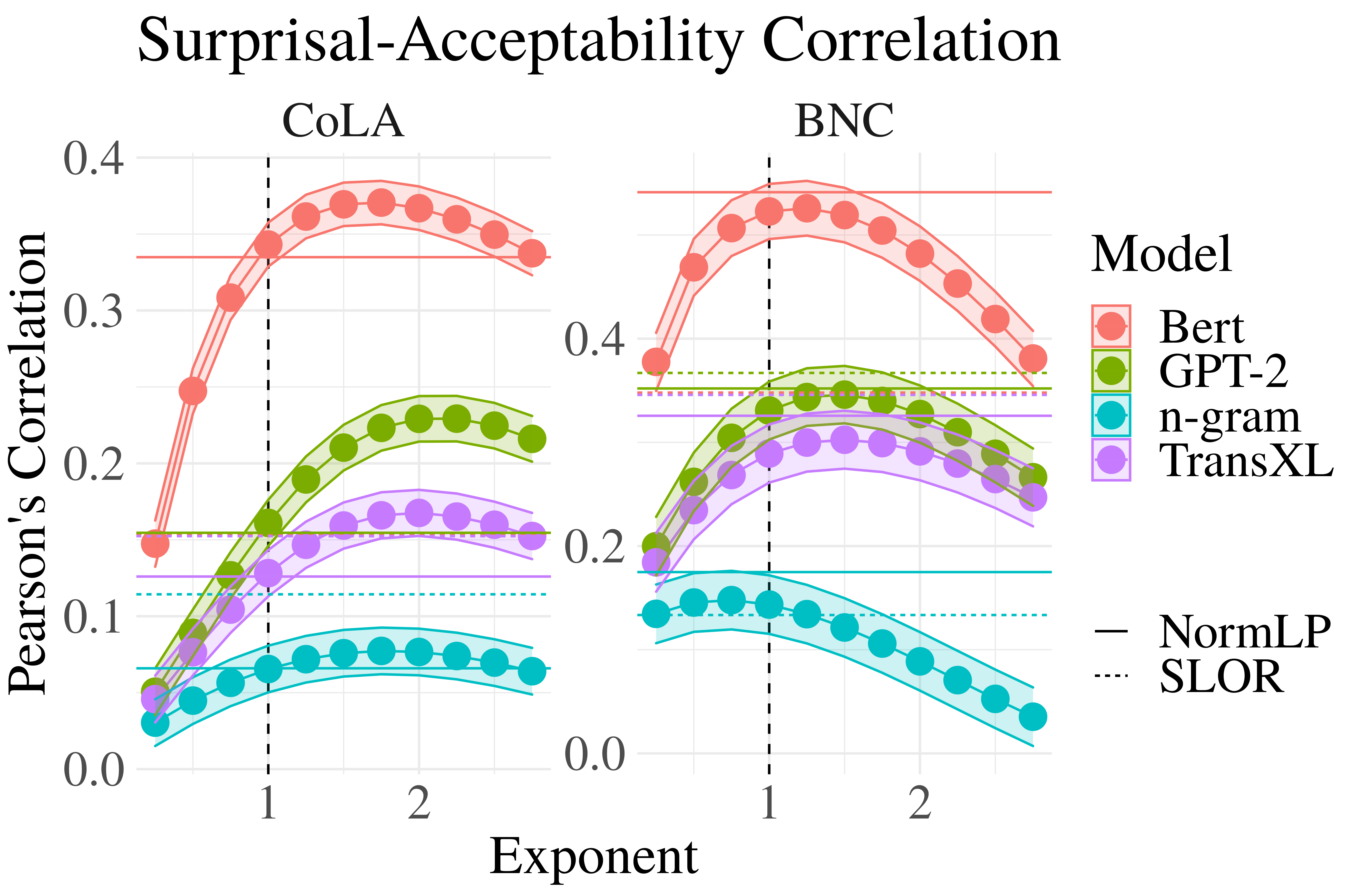}
    \caption{ \cref{fig:cola} with correlations for SLOR and NormLP predictors (from \citet{lau-2017-grammaticality}) }
 \end{minipage}\hfill
    \begin{minipage}{0.4\textwidth}
    \includegraphics[width=0.7\textwidth]{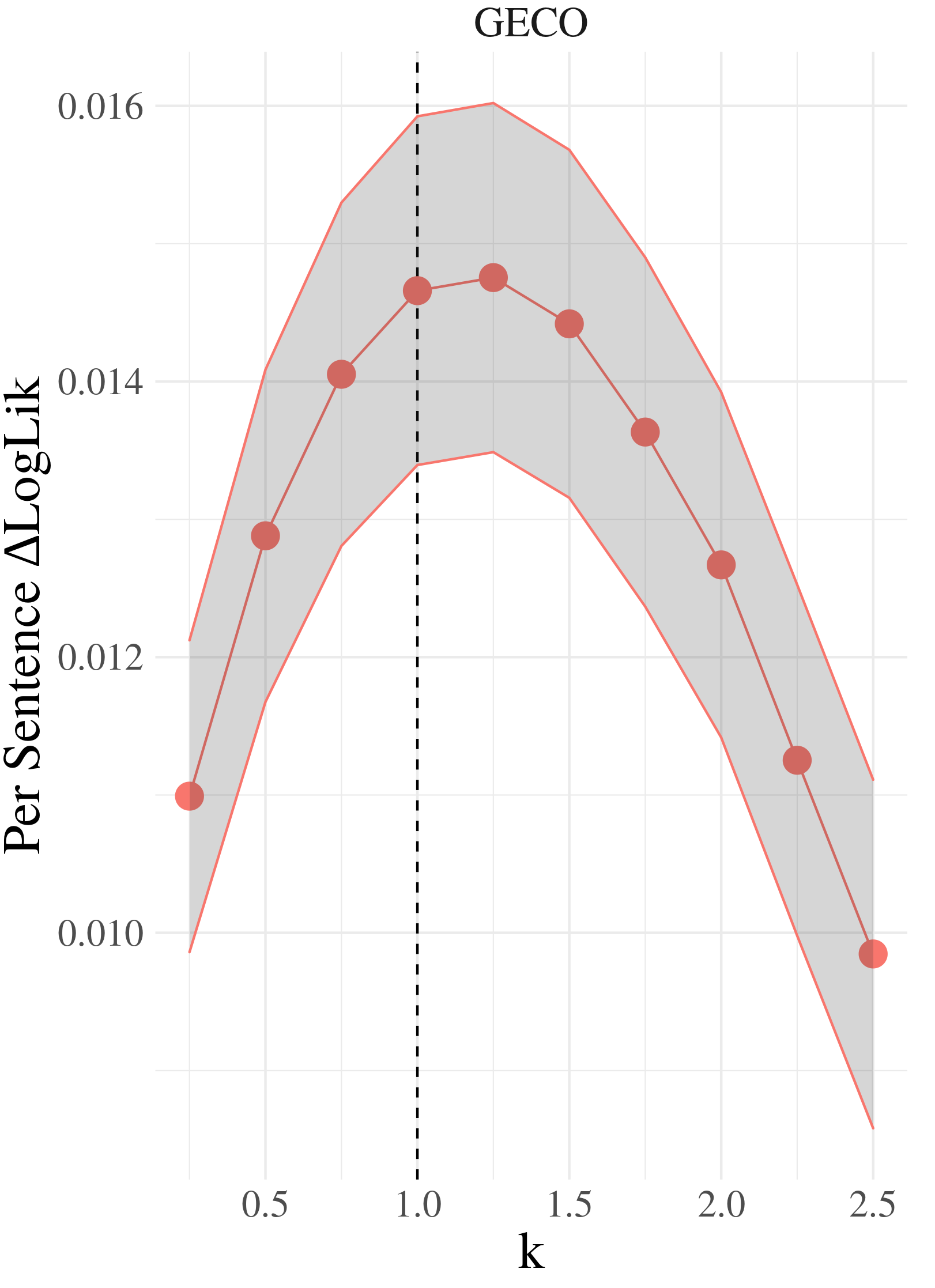}
    \caption{Same graph as in \cref{fig:all} for the Dutch GECO dataset. We use Dutch GPT-2 \cite{devries2020good} for surprisal estimates. }
    \end{minipage}
\end{figure}
\begin{figure}
    \centering
    \begin{minipage}{0.45\textwidth}
        \centering
        \includegraphics[width=0.9\textwidth]{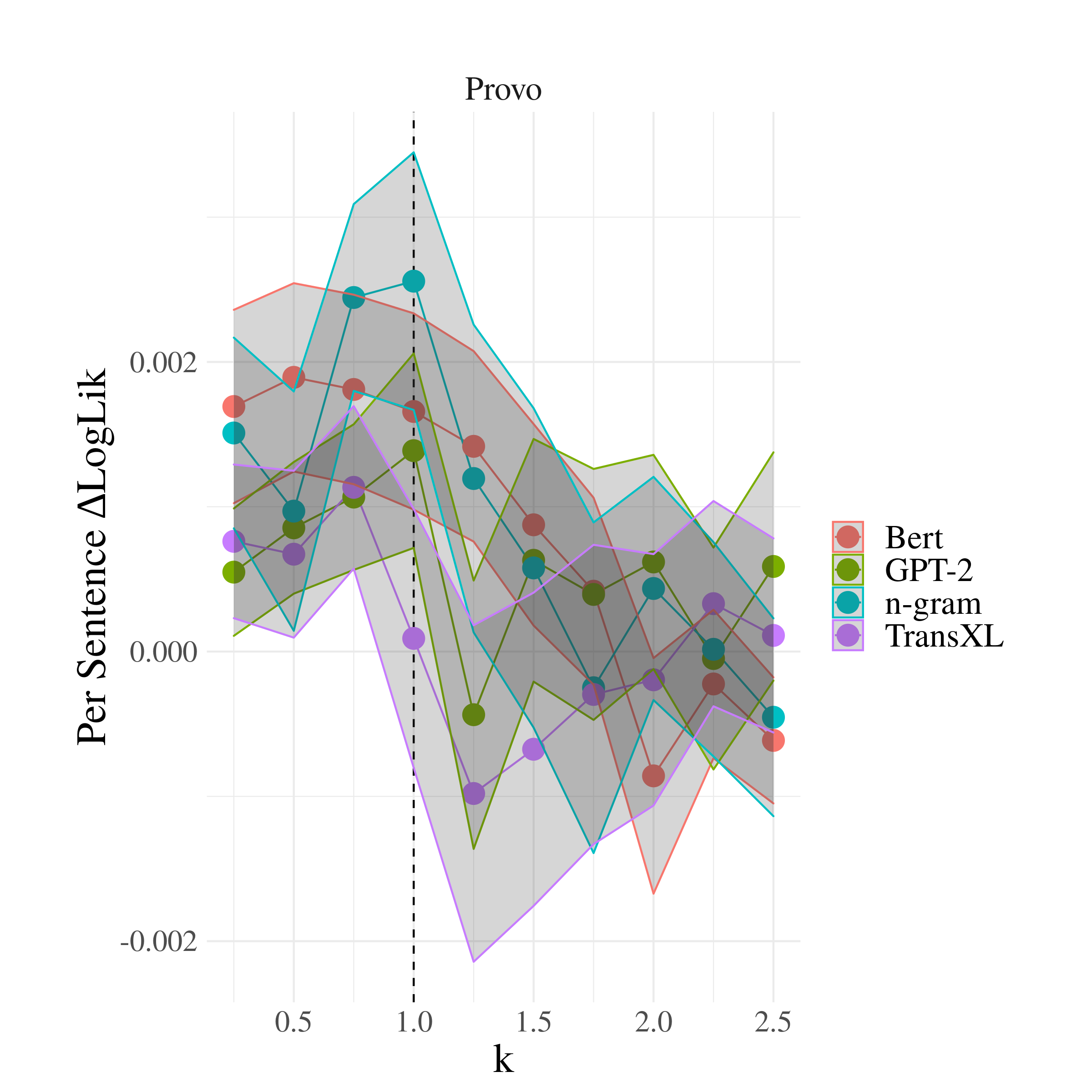} 
        \caption{Same graph as in \cref{fig:all} for Provo albeit using (the sum of) first pass times as our reading time metric. }
    \end{minipage}\hfill
    \begin{minipage}{0.45\textwidth}
        \centering
        \includegraphics[width=0.9\textwidth]{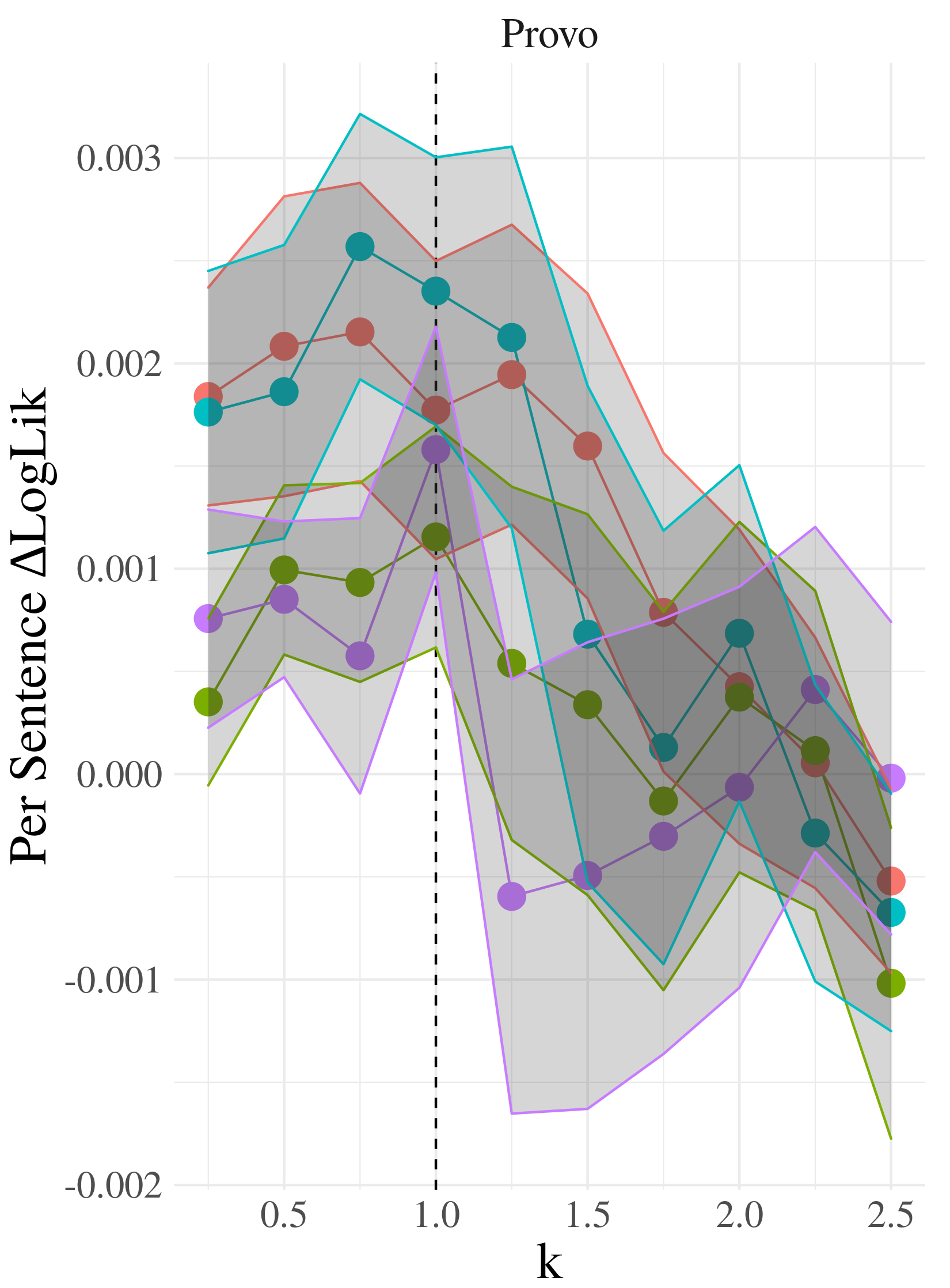} 
        \caption{Same graph as in \cref{fig:all} for Provo albeit using (the sum of) first fixation duration times as our reading time metric.}
    \end{minipage}
\end{figure}

\begin{figure*}
    \centering
    \includegraphics[width=\linewidth]{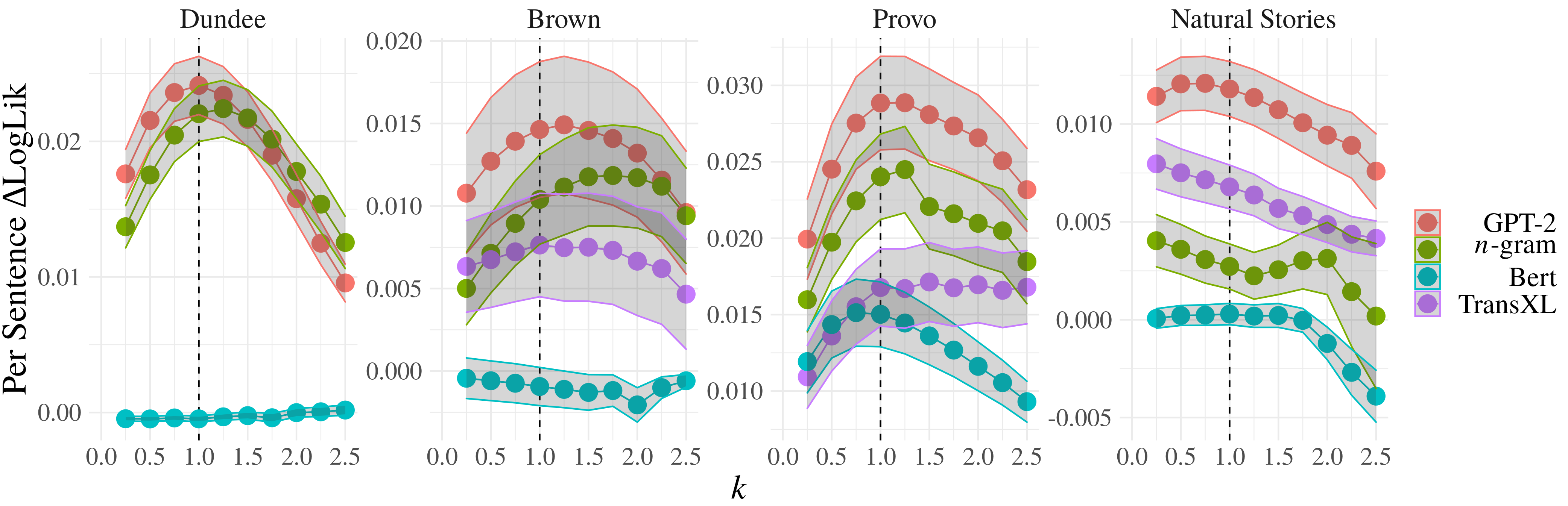}
    \caption{Version of \cref{fig:all} albeit with linear terms for summed unigram log-probability, total character length, and their interaction as predictors.}
\end{figure*}
\begin{figure*} \label{fig:aggregate}
    \centering
    \includegraphics[width=\linewidth]{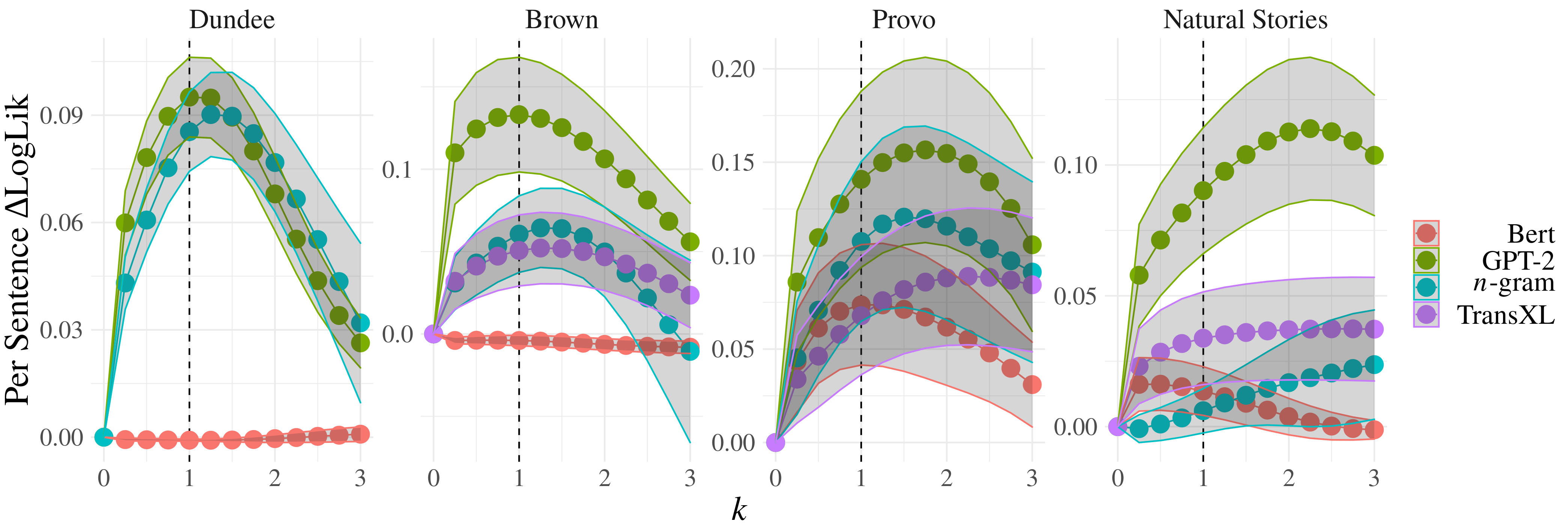}
    \caption{Version of \cref{fig:all} albeit with reading time data aggregated (mean across subjects) per sentence. A simple, linear model is used with the same predictors as \cref{fig:all}}
\end{figure*}

\begin{table*}[!ht]
  \centering
  \small
  \begin{tabular}{ lrrrrrr }
   \toprule
  \multirow{2}{*}{{\bf Predictor}} & \\
      & \multicolumn{1}{c}{Dundee} &\multicolumn{1}{c}{Brown} & \multicolumn{1}{c}{Provo}  &\multicolumn{1}{c}{NS} &  \multicolumn{1}{c}{CoLA} & \multicolumn{1}{c}{\textsc{bnc}} \\
     \hline
$\mathrm{Super}\text{-}\mathrm{Linear}$ ($k=0.25$)	& $	2.08\db{0.2}	$&$	0.88\db{0.36}	$&$	0.97\db{0.21}	$&$	1.05\db{0.11}	$&$	0.9\db{0.03}	$&$	6.11\db{0.13}	$\\
$\mathrm{Super}\text{-}\mathrm{Linear}$ ($k=1$)	& $	2.85\db{0.23}	$&$	1.16\db{0.4}	$&$	1.87\db{0.29}	$&$	1.08\db{0.11}	$&$	5.28\db{0.07}	$&$	13.89\db{0.19}	$\\
$\mathrm{Super}\text{-}\mathrm{Linear}$ ($k=1.25$)	& $	2.83\db{0.23}	$&$	1.16\db{0.41}	$&$	2\db{0.3}	$&$	1.03\db{0.11}	$&$	5.92\db{0.07}	$&$	14.35\db{0.19}	$\\
$\mathrm{Super}\text{-}\mathrm{Linear}$ ($k=1.5$)	& $	2.69\db{0.23}	$&$	1.14\db{0.41}	$&$	1.98\db{0.3}	$&$	0.98\db{0.11}	$&$	6.18\db{0.07}	$&$	14.22\db{0.19}	$\\
$\mathrm{Super}\text{-}\mathrm{Linear}$ ($k=2$)	& $	2.1\db{0.2}	$&$	1.02\db{0.39}	$&$	2.01\db{0.29}	$&$	0.9\db{0.11}	$&$	6.04\db{0.07}	$&$	12.75\db{0.18}	$\\
$\mathrm{Variance}$ (lang)	& $	1.18\db{0.15}	$&$	0.66\db{0.32}	$&$	1.59\db{0.27}	$&$	0.36\db{0.08}	$&$	5.64\db{0.07}	$&$	11.26\db{0.17}	$\\
$\mathrm{Variance}$ (sent)	& $	0.96\db{0.14}	$&$	0.53\db{0.28}	$&$	1.57\db{0.27}	$&$	0.42\db{0.09}	$&$	1.86\db{0.04}	$&$	7.56\db{0.14}	$\\
$\mathrm{Local Variance}$	& $	0.9\db{0.13}	$&$	0.55\db{0.3}	$&$	1.16\db{0.23}	$&$	0.3\db{0.07}	$&$	1.44\db{0.04}	$&$	4.88\db{0.12}	$\\
$\mathrm{Max}$	& $	0.79\db{0.14}	$&$	0.42\db{0.31}	$&$	0.33\db{0.22}	$&$	0.37\db{0.1}	$&$	1.16\db{0.03}	$&$	5\db{0.12}	$\\
$\mathrm{Entropy}$ ($k=0.25$)	& $	1.52\db{0.17}	$&$	0.45\db{0.26}	$&$	1.34\db{0.22}	$&$	0.31\db{0.12}	$&$	0.02\db{0}	$&$	0.03\db{0.01}	$\\
\makecell[l]{$\mathrm{Entropy}$ ($k=1$) {\scriptsize Shannon}}	& $	-0.01\db{0}	$&$	0\db{0.01}	$&$	0.01\db{0}	$&$	0\db{0}	$&$	0\db{0}	$&$	7.9\db{0.14}	$\\
\makecell[l]{$\mathrm{Entropy}$ ($k=2$) {\scriptsize  Renyi}}	& $	-0.01\db{0}	$&$	0\db{0.01}	$&$	0\db{0.01}	$&$	0\db{0}	$&$	0\db{0}	$&$	8.38\db{0.14}	$\\
    \bottomrule
  \end{tabular} 
  \caption{ $\deltaLL$ in $10\text{e-}2$ nats, as in \cref{tab:sentence-level} albeit with different baseline predictors for reading time data and with using BERT for surprisal estimates for acceptability judgments. Along with the predictors specified in \cref{tab:sentence-level}, models for reading times here also contain predictors for unigram log-probability, total character length, and the interaction of the two (reading times). We see largely the same trends as in \cref{tab:sentence-level}.}
  \label{tab:sentence-level-bonus}
\end{table*}

\begin{figure*} 
    \centering
    \includegraphics[width=\linewidth]{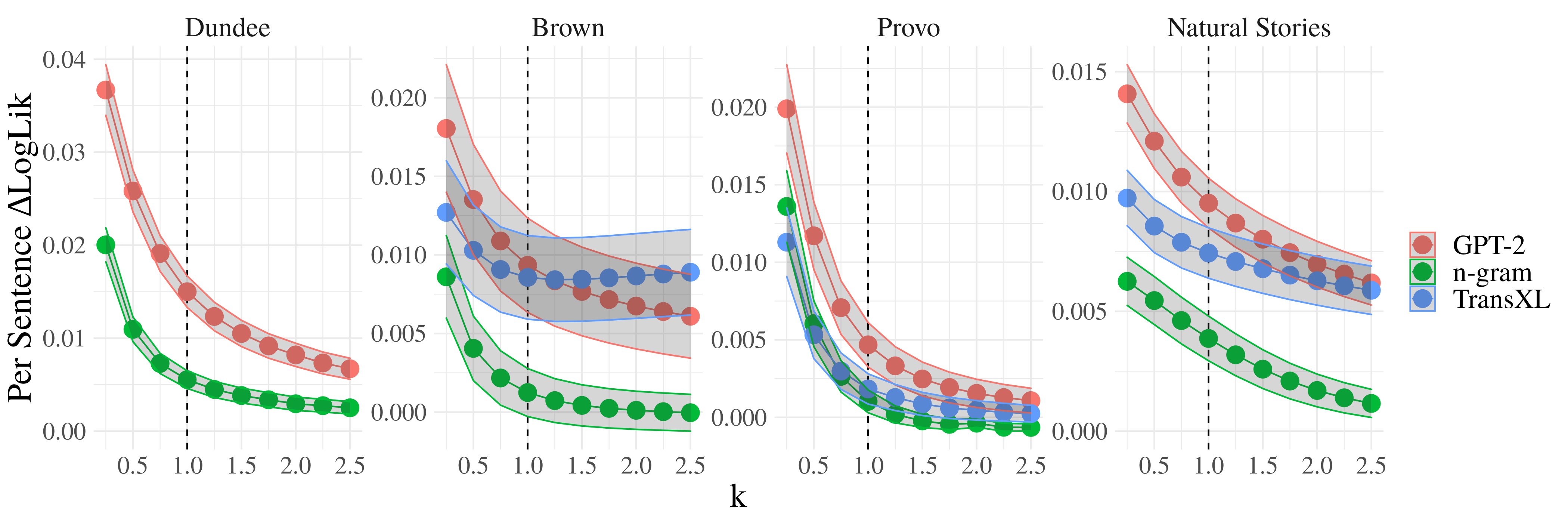}
    \caption{Version of \cref{fig:all} albeit using probabilities instead of surprisal in the summation $\sum_{n=1}^N  s(u_{n})$. Note that the magnitude of $\deltaLL$ is smaller than when using surprisal, indicating the superior predictive power of the latter. This stands in contrast to the experimental findings of \citet{BROTHERS2021104174}.}
    \label{fig:probs}
\end{figure*}

\end{document}